




\documentclass[sigconf,nonacm]{aamas} 



\usepackage{balance} 
\usepackage{bm}
\usepackage{dsfont}
\usepackage{algorithm}
\usepackage{algpseudocode}
\usepackage{tikz}
\usepackage{enumerate}
\usepackage{blkarray}
\usepackage{paralist}

\theoremstyle{definition}
\newtheorem{dfn}{Definition}[section]
\newtheorem{exmpl}{Example}[section]
\newtheorem{prop}{Proposition}[section]

\newcommand{\R}{\mathbb{R}}
\newcommand{\N}{\mathbb{N}}

\newcommand{\Argmin}{\mathop{\rm argmin}\limits}
\newcommand{\argmax}{\mathop{\rm argmax}\nolimits}

\usepackage{tikz}
\newcommand{\circled}[1]{\tikz[baseline=(char.base)]{\node[shape=circle,draw,inner sep=1pt] (char) {#1};}}
\newcommand{\fround}{r}
\newcommand{\fsoftround}{sr}
\newcommand{\indicator}[1]{\mathds{1}\left[#1\right]}
\newcommand{\allonevec}{\bm{1}}
\newcommand{\allzerovec}{\bm{0}}
\newcommand{\zeromatrix}{\bm{O}}
\newcommand{\softmax}{\mathrm{softmax}}

\newcommand{\rep}{\mathrm{repeat}}

\newcommand{\ceil}[1]{\lceil#1\rceil}
\newcommand{\softRR}{\mathrm{SoftRR}}
\newcommand{\neuralRR}{\mathrm{NRR}}

\newcommand{\setofRR}{\mathcal{F}_{\mathrm{RR}}}



\setcopyright{ifaamas}
\acmConference[AAMAS '25]{Proc.\@ of the 24th International Conference
on Autonomous Agents and Multiagent Systems (AAMAS 2025)}{May 19 -- 23, 2025}
{Detroit, Michigan, USA}{A.~El~Fallah~Seghrouchni, Y.~Vorobeychik, S.~Das, A.~Nowe (eds.)}
\copyrightyear{2025}
\acmYear{2025}
\acmDOI{}
\acmPrice{}
\acmISBN{}



\acmSubmissionID{274}


\title[AAMAS-2025 Formatting Instructions]{Learning Fair and Preferable Allocations through Neural Network}


\author{Ryota Maruo}
\affiliation{
  \institution{Kyoto University}
  \city{Kyoto}
  \country{Japan}}
\email{mryota@ml.ist.i.kyoto-u.ac.jp}

\author{Koh Takeuchi}
\affiliation{
  \institution{Kyoto University}
  \city{Kyoto}
  \country{Japan}}
\email{takeuchi@i.kyoto-u.ac.jp}

\author{Hisashi Kashima}
\affiliation{
  \institution{Kyoto University}
  \city{Kyoto}
  \country{Japan}}
\email{kashima@i.kyoto-u.ac.jp}


\begin{abstract}
The fair allocation of indivisible resources is a fundamental problem.
Existing research has developed various allocation mechanisms or algorithms to satisfy different fairness notions.
For example, round robin (RR) was proposed to meet the fairness criterion known as envy-freeness up to one good (EF1).
Expert algorithms without mathematical formulations are used in real-world resource allocation problems to find preferable outcomes for users.
Therefore, we aim to design mechanisms that strictly satisfy good properties with replicating expert knowledge.
However, this problem is challenging because such heuristic rules are often difficult to formalize mathematically, complicating their integration into theoretical frameworks. 
Additionally, formal algorithms struggle to find preferable outcomes, and directly replicating these implicit rules can result in unfair allocations because human decision-making can introduce biases.
In this paper, we aim to learn implicit allocation mechanisms from examples while strictly satisfying fairness constraints, specifically focusing on learning EF1 allocation mechanisms through supervised learning on examples of reported valuations and corresponding allocation outcomes produced by implicit rules.
To address this, we developed a neural RR (NRR), a novel neural network that parameterizes RR.
NRR is built from a differentiable relaxation of RR and can be trained to learn the agent ordering used for RR.
We conducted experiments to learn EF1 allocation mechanisms from examples, demonstrating that our method outperforms baselines in terms of the proximity of predicted allocations and other metrics.

\end{abstract}



\keywords{ Fair Division, Deep Learning Architecture, Automated Mechanism Design}


         
\newcommand{\BibTeX}{\rm B\kern-.05em{\sc i\kern-.025em b}\kern-.08em\TeX}


\begin{document}


\pagestyle{fancy}
\fancyhead{}


\maketitle 


\section{Introduction}
\label{sec:introduction}
The fair allocation of indivisible resources is a fundamental problem in mechanism design, extensively studied in both computer science and economics~\cite{amanatidis2023fair,aziz2019fair,brandt2016handbook}.
Fair division problems aim to allocate indivisible items fairly among agents who have individual preferences or valuations for the resources.
Research has predominantly focused on the fair division of {\em goods}, where $n$ agents assign non-negative values on $m$ indivisible items~\cite{conitzer2017fair}.
Examples of good allocation include course assignments for students~\cite{othman2010finding} and the general goods allocation approach employed by Spliddit~\cite{goldman2015spliddit}, one of the most successful applications of fair division principles.

Researchers have developed numerous allocation mechanisms or algorithms to address fair division problems under various fairness concepts.
One well-known fairness criterion is envy-freeness (EF)~\cite{foley1966resource}, where no agent believes that another agent received a better allocation~\cite{amanatidis2023fair}.
However, EF allocations do not always exist, as demonstrated by a simple counterexample: when there are two agents and a single good, any allocation results in envy from the unallocated agent.
To address this, \citet{lipton2004approximately} and \citet{budish2011combinatorial} proposed a relaxed fairness notion called envy-freeness up to one good (EF1), meaning that envy can be eliminated by removing a single good from the envied agent’s allocation.
EF1 allocations always exist and can be computed in polynomial time using the round robin (RR) mechanism~\cite{caragiannis2016unreasonable}.
In this algorithm, an order of agents is defined, and each agent, in turn, selects their most preferred item from the remaining items.
Other algorithms and variants of fairness notions have also been explored, as described in surveys~\cite{aziz2022algorithmic,amanatidis2023fair}.


Expert algorithms without mathematical formulations are used in real-world resource allocation problems to find preferable outcomes for users.
Although their goodness is not formally proven, these algorithms can use implicit or empirical knowledge in various domains~\cite{gudes1990resource}.
For example, health providers allocate clinical resources based on subconscious knowledge, such as work ethics~\cite{lemieux-charles1993ethical,shaikh2020artificial}.
As in the case of divide-and-choose, which appeared in Bible and was later shown to have good properties in EF, there is a possibility that these algorithms also have some good properties. 
Even if we can acquire expert knowledge, it is not immediately amenable to formalization.
On the other hand, existing formal algorithms cannot skim off the top of implicit knowledge of experts.


Designing mechanisms that strictly satisfy good properties with replicating expert knowledge is challenging for three reasons.
First, because these heuristics are difficult to formalize into precise mathematical expressions, translating them into theoretical frameworks is problematic.
Second, formal algorithms can select an allocation that strictly satisfies good properties, but struggle to find one from a set of candidates that is preferable to experts.
Third, human judgments can introduce biases, potentially leading to unfair allocations~\cite{gordon2017resource,li2010how}, and thus directly mimicking such implicit rules can reproduce or increase undesired biases.


In this paper, we study how to learn implicit allocation rules from examples of allocation results while strictly satisfying fairness constraints.
We significantly extended the general idea introduced by \citet{narasimhan2016automated} within the context of {\em automated mechanism design} (AMD). 
In their work, they proposed a framework that learns mechanisms from examples while ensuring constraint satisfaction.
Given example pairs of reported valuations and allocations based on implicit rules, our goal is to train a parameterized fair allocation mechanism by capturing the relationship between inputs (valuations) and outputs (allocations).
We optimize the parameters through supervised learning, minimizing the discrepancy between predicted and actual outcomes.
This approach extends the prior work~\cite{narasimhan2016automated}, adapting it to our problem, which addresses both the reproduction of implicit rules and the enforcement of fairness constraints.
Instead of formalizing expert knowledge, our approach extracts their rules as a parameter through learning from examples.

We focused on learning EF1 allocation mechanisms via supervised learning using examples of reported valuations and corresponding allocation outcomes determined by implicit rules.
We selected EF1 as the fairness constraint because it is widely used in real-world applications like Spliddit~\cite{goldman2015spliddit}.
To implement the framework, we aimed to construct a parameterized family of EF1 allocation mechanisms by developing a neural network that parameterizes the RR mechanism.
In particular, we treated the agent order as a parameter of the RR mechanism and aimed to develop a neural network in which this order is a learnable parameter.
To implement this approach, we introduced two novel techniques.
First, unlike prior work~\cite{narasimhan2016automated}, we proposed a soft RR ($\softRR$) algorithm that makes the discrete procedure of RR differentiable, enabling it to be used for back-propagation.
Second, we constructed a novel neural network called a neural RR ($\neuralRR$).
Given a valuation profile as input, $\neuralRR$ first computes the agent order parametrically, and then executes $\softRR$ to output an EF1 allocation.
This architecture allows the agent order of RR to be learned by back-propagating errors through $\softRR$, rather than being pre-specified or fixed.
$\neuralRR$ rigorously satisfies EF1 during inference because it is equivalent to RR with the parametrically computed order.

We conducted experiments to learn EF1 allocation mechanisms from examples.
We synthesized allocation examples by sampling valuation profiles and executing an existing allocation mechanism.
To evaluate performance, we measured discrepancies between the predicted and correct allocation outcomes.
Additionally, we used other relevant metrics to verify that $\neuralRR$ accurately reflects the implicit objectives encoded in the examples.
We used RR and an existing neural network model as baselines.
Experimental results demonstrated that $\neuralRR$ outperforms the baselines in terms of the proximity of predicted allocations and other metrics.

Our contributions are summarized as follows.
(i) To the best of our knowledge, we are the first to consider learning EF1 allocation rules from examples within the fair division and AMD literature.
(ii) We proposed $\softRR$, an algorithm that makes RR differentiable and enables backpropagation of predicted and actual allocation outcomes.
(iii) We developed a novel neural network, called $\neuralRR$, for learning EF1 allocation mechanisms through examples.
$\neuralRR$ is constructed from $\softRR$ and can learn the agent order used for RR.
(iv) We conducted experiments to learn an EF1 allocation mechanism from reported agent valuations and corresponding allocation outcomes.
The experimental results confirmed that $\neuralRR$ can recover implicit allocation mechanisms from examples while satisfying fairness constraints.

Code is available at \url{https://github.com/MandR1215/neural_rr}.


\section{Related Work}
\label{sec:related_works}
The study of fair allocation of indivisible items has led to the development of various algorithms based on different fairness and efficiency notions.
In addition to EF, other fairness concepts include the maximin share (MMS), which ensures that each agent receives an allocation at least as valuable as the least-valued subset of items they could obtain, assuming they divide the entire set and select the least valued subset for themselves~\cite{budish2011combinatorial}.
\citet{amanatidis2017approximation} proposed an approximation algorithm for MMS allocations.
Pareto efficiency is another key efficiency notion, meaning that 
no alternative allocation can make some agents strictly better off without making any other agent strictly worse off.
\citet{caragiannis2016unreasonable} demonstrated that allocations maximizing Nash welfare yield both EF1 and Pareto-efficient allocations.
However, these formal algorithms are static and cannot learn from examples.

Our work contributes to the literature on AMD, a field first introduced by \citet{conitzer2002complexity, conitzer2004self}.
AMD focuses on automatically designing mechanisms by solving optimization problems, where objective functions correspond to social objectives and constraints model incentive properties~\cite{conitzer2002complexity,conitzer2004self,sandholm2003automated}.
For instance, in auction settings, the problem is often framed as maximizing expected revenue while satisfying conditions such as incentive compatibility and individual rationality~\cite{conitzer2004self,curry2023differentiable,duan2023scalable,duan2022context,duetting2019optimal,feng2018deep,peri2021preferencenet,rahme2021permutation,sandholm2015automated,shen2019automated,wang2024gemnet}.
AMD also has many other applications, including facility location~\cite{golowich2018deep}, data market design~\cite{ravindranath2023datamarket}, and contract design~\cite{wang2023contract}.
However, these methods are typically fixed to explicit mathematical optimization problems and do not consider fitting implicit rules from examples.

In differentiable economics \cite{duetting2019optimal}, researchers have designed neural networks to solve AMD problems.
Pioneered by \citet{duetting2019optimal}, who introduced RegretNet, a line of AMD research focused on solving revenue-optimal auction problems using neural networks~\cite{curry2023differentiable,feng2018deep,peri2021preferencenet,rahme2021permutation,shen2019automated,wang2024gemnet}.
However, the current study focuses on fair allocations.
\citet{mishra2022eef1nn} explored AMD for fair allocations using a neural network, but their method only approximately satisfies EF1, whereas ours rigorously satisfies the condition.


\section{Preliminaries}
\label{sec:Preliminaries}
We use $[n]$ to denote the set $\{1, \dots, n\}$ for $n \in \mathbb{N}$.
A row vector is represented as $\bm{x} = [x_1, \dots, x_d]$, and a matrix as $\bm{X} = [X_{ij}]_{ij}$. 
The $d$-dimensional all-one vector and all-zero vector are denoted by $\allonevec_d$ and $\allzerovec_d$, respectively. 
The element-wise product of two vectors, $\bm{x}$ and $\bm{y}$, is written as $\bm{x} \odot \bm{y} = [x_1 y_1, \dots, x_d y_d]$. 
We define $\bm{x} \ge \bm{y} \iff x_i \ge y_i , (\forall i)$. 
A zero matrix with $n$ rows and $m$ columns is denoted by $\zeromatrix_{n, m}$. 
For a matrix $\bm{X} \in \R^{n \times m}$, $\bm{X}[i]$ denotes the $i$-th row vector, $\bm{X}[:, j]$ denotes the $j$-th column vector, and $\bm{X}[a:b, c:d] = [X_{ij}]_{i=a, \dots, b, j = c, \dots, d}$ denotes the sub-matrix selected by row and column ranges. 
We use $\indicator{\cdot}$ to denote the indicator function.

We study the standard setting of fair division of a set of indivisible goods $[m] = \{1,2,...,m\}$ among a set of agents $[n] = \{1,2,\dots, n\}$.
A {\em bundle} is a subset of goods.
An agent $i$ has a {\em valuation function} $v_i:2^{[m]}\to\R_{\ge 0}$ that assigns a non-negative real value to a bundle.
We assume that the valuation function is additive; that is, we define $v_i(S) := \sum_{j\in S} v_{ij}$ for each bundle $S\subseteq[m]$ where $v_{ij} := v_i(\{j\})$.
A {\em valuation profile} $(v_1,\dots,v_n)$ is a collection of valuation functions, and we represent it by a matrix $\bm{V} := [v_{ij}]_{i\in [n], j\in [m]}\in\R_{\ge 0}^{n\times m}$.
We denote an {\em integral allocation} by $\mathcal{A} = (A_1,\dots, A_n)$, where $A_i\subseteq[m]$ is the bundle allocated to agent $i$, and each good is allocated to exactly one agent, i.e., $A_i\cap A_j = \emptyset$ for all $i\neq j$ and $\cup_{i\in [n]} A_i = [m]$.
We also represent an integral allocation $(A_1,\dots, A_n)$ by the matrix $\bm{A}\in\{0,1\}^{n\times m}$, where $A_{ij} = \indicator{j\in A_i}$.
A {\em fractional allocation} is one in which some goods are allocated fractionally among agents.
Unless otherwise specified, the term ``allocation'' refers to an integral allocation without explicitly using the word ``integral.''

We focus on {\em envy-freeness} (EF) and its relaxation as fairness concepts.
An allocation is EF if every agent values their own allocation at least as highly as they value any other agent's allocation.

\begin{dfn}[EF~\cite{foley1966resource}]
An allocation $(A_1,\dots, A_n)$ is {\em envy-free} (EF) if, for all agents $i,j\in [n]$, $v_i(A_i)\ge v_i(A_j)$.
\end{dfn}

EF allocations do not always exist. 
For example, when allocating one good to two agents, the agent without an allocated good envies the agent who has it.
To guarantee the existence of a solution, \citet{lipton2004approximately} and \citet{budish2011combinatorial} proposed the relaxation of EF known as EF1.
An allocation is EF1 if either no envy exists, or the envy from $i$ to $j$ can be eliminated by removing a good from $A_j$.

\begin{dfn}[EF1~\cite{lipton2004approximately,budish2011combinatorial}]
    An allocation $(A_1,\dots,A_n)$ is EF1 if, for all $i,j\in [n]$, either $v_i(A_i)\ge v_i(A_j)$, or there exists a good $o \in A_j$ such that $v_i(A_i) \ge v_i(A_j\setminus\{o\})$.
\end{dfn}

An {\em allocation mechanism} is a function $f$ that maps a valuation profile to an allocation.
That is, for any profile $\bm{V}$, $f(\bm{V}) = (A(\bm{V})_1,\dots, A(\bm{V})_n)$ is an allocation.
We say that $f$ is EF1 if $f$ always outputs EF1 allocations for any input valuation profile $\bm{V}$.


\section{Problem Setting}
Given an implicit allocation mechanism $g$, our goal is to find an allocation mechanism that approximates $g$, subject to EF1 constraint.

We formally define our problem by following the framework established in prior work~\cite{narasimhan2016automated}.
Because $g$ is an implicit rule, akin to a human heuristic, its explicit formulation of $g$ is unavailable.
Instead, we assume access to $g$ through a dataset $S:=\{(\bm{V}^1, \bm{A}^1),$ $\dots, (\bm{V}^L, \bm{A}^L)\}$, where $\bm{V}^1,\dots,\bm{V}^L$ are examples of valuation profiles sampled from an unknown distribution over the set of all valuation profiles, and $\bm{A}^1 = g(\bm{V}^1),\dots,\bm{A}^L= g(\bm{V}^L)$ are the corresponding allocation outcomes determined by $g$.
Given the dataset $S$, our goal is to find the EF1 allocation mechanism that best approximates $g$:
\begin{align*}
    \min_{f\in\mathcal{F}_\mathrm{EF1}} \sum^L_{r=1} d(\bm{A}^r, f(\bm{V}^r)),
\end{align*}
where $\mathcal{F}_\mathrm{EF1}$ is the set of all EF1 allocation mechanisms, and $d(\bm{A}, \bm{A}')$ is a function that calculates the discrepancy between two allocation outcomes $\bm{A}$ and $\bm{A}'$.
Because the set of all the EF1 allocation mechanisms is not explicitly identifiable, we follow existing research~\cite{narasimhan2016automated} and focus on searching over a subset of EF1 allocation mechanisms.
Technically, we consider a parameterized subset of all the EF1 allocation mechanisms $\mathcal{F} := \{f_{\bm{\theta}}\mid \bm{\theta}\in\bm{\Theta}\}\subset\mathcal{F}_{\mathrm{EF1}}$, where $\bm{\theta}$ is a parameter from the parameter space $\bm{\Theta}$.
The problem is then solved by searching for $f_{\bm{\theta}^*}\in\mathcal{F}$ corresponding to the optimal $\bm{\theta}^*$:
\begin{align}
    f_{\bm{\theta}^*} := \Argmin_{\bm{\theta}\in\bm{\Theta}} \sum^L_{r=1} d(\bm{A}^r, f_{\bm{\theta}}(\bm{V}^r))\label{eq:sample_approx}.
\end{align}
In other words, we optimize the parameter $\bm{\theta}$ by minimizing the discrepancy between the predicted and the actual allocations. 


\section{Proposed Method}
To solve problem in Equation~\eqref{eq:sample_approx}, we propose a parameterized family of mechanisms $\mathcal{F}$ based on RR~\cite{caragiannis2016unreasonable}, one of the EF1 allocation algorithms.
RR allocates goods in multiple {\em rounds}, where, in each round, agents choose their most preferred goods from the remaining available items, following a specific order.
RR's output depends on the order of agents, so we propose modeling $f_{\bm{\theta}}$ in Equation~\eqref{eq:sample_approx} through a neural network, where $\bm{\theta}$ as a learnable parameter.
Specifically, we aim to construct a neural network that computes the agent order parametrically and then executes RR according to the order. 
By doing so, we can optimize the agent order by backpropagating errors through RR.
Creating such a model is not straightforward because RR is a discrete procedure and is not directly suitable for gradient-based training.
To address this challenge, we first propose a differentiable relaxation of RR, which incorporates the agent order.
We then develop a neural network that integrates this relaxed RR, along with a sub-network to parametrize the agent order.

We describe our proposed method as follows.
First, we briefly review the RR algorithm.
Next, we present the differentiable relaxation of RR.
Finally, we describe our neural network architecture, which incorporates the relaxed version of RR and a component for parametric computation of agent orders as the two building blocks.

\subsection{Round Robin}
\begin{algorithm}[t]
    \caption{RR~\cite{caragiannis2016unreasonable}}
    \label{alg:rr}
    \begin{algorithmic}[1]
        \Require A valuation profile $\bm{V}=(v_{ij})_{i\in [n], j\in [m]}\in\R_{\ge 0}^{n\times m}$.
        \Ensure An allocation $\mathcal{A}$.
        \For{each agent $i\in [n]$}
            \State $A_i\leftarrow\emptyset$
        \EndFor
        \State $C\leftarrow[m]$
        \For{$r = 1,\dots, \ceil{m/n}$}
            \For{$i=1,\dots,n$}
                \Comment{Run the $r$-th round}
                \If{$C\neq \emptyset$}
                    \State $g^*\leftarrow\argmax_{j\in C} v_{ij}$
                    \State $A_i\leftarrow A_i\cup\{g^*\}$
                    \State $C\leftarrow C\setminus\{g^*\}$
                \EndIf
            \EndFor
        \EndFor
        \State\Return $\mathcal{A} = (A_1,\dots, A_n)$
    \end{algorithmic}
\end{algorithm}

RR consists of multiple rounds, during each of which agents $1,\dots,n$ sequentially pick their most preferred goods according to a predefined order.
The entire procedure of RR is detailed in Algorithm~\ref{alg:rr}.
It can be shown through a straightforward proof that RR always produces an EF1 allocation.

\begin{prop}[\citet{caragiannis2016unreasonable}]
    RR computes an EF1 allocation.
    \label{prop:RR_is_EF1}
\end{prop}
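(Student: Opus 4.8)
\emph{Proof proposal.}
The plan is to prove the proposition directly from the structure of Algorithm~\ref{alg:rr} by a round-by-round comparison of the goods that agents pick. Fix an arbitrary pair of agents $i,j \in [n]$; it suffices to show that $i$ does not envy $j$ by more than one good. For each round $r \in \{1,\dots,\ceil{m/n}\}$, let $g_i^r$ denote the good agent $i$ picks in round $r$, adopting the convention that $g_i^r$ is a dummy good of value $0$ whenever $C = \emptyset$ at agent $i$'s turn in round $r$. This convention makes the argument uniform even when $m$ is not a multiple of $n$, and it changes neither the bundles $A_k$ nor the valuations $v_k(A_k)$. Then $A_i = \{\, g_i^r : r \,\}$ and $A_j = \{\, g_j^r : r \,\}$, and by additivity $v_i(A_i) = \sum_r v_{i g_i^r}$ and $v_i(A_j) = \sum_r v_{i g_j^r}$.

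\emph{Case $i < j$.} In every round agent $i$'s turn precedes agent $j$'s, so at the moment $i$ picks $g_i^r$ the good $g_j^r$ has not yet been removed from $C$ (or $g_j^r$ is a dummy, which forces $g_i^r$ to be a dummy as well). Since $i$ chooses a value-maximizing good from $C$, we get $v_{i g_i^r} \ge v_{i g_j^r}$ for every $r$. Summing over $r$ gives $v_i(A_i) \ge v_i(A_j)$, so $i$ does not envy $j$ at all, which is stronger than EF1.

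\emph{Case $i > j$.} Now $j$'s turn precedes $i$'s within each round, so I instead compare $i$'s round-$r$ pick with $j$'s round-$(r+1)$ pick: at the moment $i$ picks $g_i^r$ in round $r$, the good $g_j^{r+1}$ is picked only later (in round $r+1$) and is therefore still in $C$ (or a dummy), hence $v_{i g_i^r} \ge v_{i g_j^{r+1}}$. Summing over $r \ge 1$ yields $v_i(A_i) \ge \sum_{r\ge 1} v_{i g_j^{r+1}} = v_i(A_j) - v_{i g_j^1} = v_i(A_j \setminus \{g_j^1\})$. Thus removing the single good $g_j^1$ from $A_j$ eliminates $i$'s envy (and if $g_j^1$ is a dummy this already gives $v_i(A_i) \ge v_i(A_j)$). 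Ranging over all ordered pairs $(i,j)$ shows that the output of Algorithm~\ref{alg:rr} is EF1, and the same argument applies verbatim to RR run with any fixed agent order after relabeling agents.

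I expect the only real difficulty to be careful bookkeeping rather than any conceptual step: making precise, from the update $C \leftarrow C \setminus \{g^*\}$ in Algorithm~\ref{alg:rr}, the claim that ``the good the other agent picks later is still available now'', and checking that the dummy-good convention for the possibly incomplete final round introduces no spurious comparisons (it does not, since a dummy pick by one agent in round $r$ forces a dummy pick by every later agent in round $r$ and by everyone in subsequent rounds). A minor point not to overlook is that EF1 quantifies over all ordered pairs, so both cases above are genuinely needed to cover envy in each direction.
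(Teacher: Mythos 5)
Your proof is correct and takes essentially the same approach as the paper: the earlier agent never envies the later one by a per-round comparison, and the later agent's envy is eliminated by removing the earlier agent's first pick, which is exactly the paper's ``treat the remaining execution as a new process'' argument made explicit via the shifted comparison $v_{i g_i^r} \ge v_{i g_j^{r+1}}$. Your version just adds more careful bookkeeping (dummy goods for the incomplete last round) than the paper's informal sketch.
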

\begin{proof}
    Consider two distinct agents $i$ and $j$.
    Without loss of generality, assume $i < j$.
    Because $i$ picks goods before agent $j$ in each round, agent $i$ receives more valuable goods than those allocated to agent $j$.
    As a result, agent $i$ has no envy toward agent $j$.
    Envy may exist from $j$ toward $i$.
    Now, consider the moment when $i$ selects the first good $o$ in the initial round. 
    If we treat the execution of RR process for the remaining goods as a new process, then $j$ picks before $i$ in each subsequent round, eliminating $j$'s envy toward $i$. 
    As a result, $j$’s envy dissipates once $o$ is removed from $A_i$.
\end{proof}

RR can produce different EF1 allocations depending on the initial order of the agents. 
In other words, permuting the indices of the agents can result in different allocations as shown in Example~\ref{exmpl:drr_order}.

\begin{exmpl} 
\label{exmpl:drr_order}
\begin{table}[tb]
    \centering
    \caption{The valuation profile considered in Example~\ref{exmpl:drr_order}.
    The first row is the index of goods.
    The rest of three rows correspond to the valuation of the agents.
    The circled numbers mean the good in the column is allocated to the agent in the same row.
    Left: the original valuation profile.
    Right: the permuted valuation profile. The agents $3$, $1$, $2$ are treated as the agents $1'$, $2'$, $3'$, respectively.}
    \label{tab:example_v_drr_order}
    \begin{tabular}{c|cccc}
        & $1$ & $2$ & $3$ & $4$\\\midrule
        $v_1$& $1$ & $0$ & $\circled{3}$ & $\circled{2}$\\
        $v_2$& $\circled{3}$ & $2$ & $1$ & $0$ \\
        $v_3$& $4$ & $\circled{3}$ & $2$ & $1$ \\\bottomrule
    \end{tabular}\hfill
    \begin{tabular}{c|cccc}
        & $1$ & $2$ & $3$ & $4$\\\midrule
        $v_{1'}=v_3$& $\circled{4}$ & $3$ & $2$ & $\circled{1}$\\
        $v_{2'}=v_1$& $1$ & $0$ & $\circled{3}$ & $2$ \\
        $v_{3'}=v_2$& $3$ & $\circled{2}$ & $1$ & $0$ \\\bottomrule
    \end{tabular}
\end{table}
Consider three agents $1$, $2$, and $3$ whose valuations $v_1$, $v_2$, and $v_3$, respectively, are as described in the right Table~\ref{tab:example_v_drr_order}.
When we run RR in that agent order, we obtain an EF1 allocation $\mathcal{A} = (A_1 = \{3,4\}, A_2 = \{1\}, A_3 = \{2\})$.
On the other hand, consider permuting the agent indices: set three virtual three agents $1'$, $2'$ and $3'$ who are actually agent $3$, $1$, and $2$, respectively.
When we run RR on the three virtual agents $1'$ , $2'$, and $3'$ in this order, we obtain an allocation $(A_{1'} = \{1,4\}, A_{2'} = \{3\}, A_{3'}=\{2\})$, which is actually equal to allocation $(A_{1}' = \{3\}, A_{2}' = \{2\}, A_{3}'=\{1,4\}) \neq \mathcal{A}$.\qed
\end{exmpl}

We formally define RR that runs on different agent indices.
A permutation is represented by a bijective function $\pi:[n]\to[n]$.

\begin{dfn}[RR induced by permutations]
    Let $\pi$ be a permutation. 
    The procedure $\mathrm{RR}_\pi:\bm{V}\mapsto\mathcal{A}$ represents the RR procedure that operates on a valuation profile $(v_{\pi^{-1}(1)}, \dots, v_{\pi^{-1}(n)})$ according to this order.
    In other words, $\mathrm{RR}_\pi$ is the RR procedure applied to virtual agents $1',\dots,n'$, where the valuations are $v_{1'} = v_{\pi^{-1}(1)},\dots, v_{n'} = v_{\pi^{-1}(n)}$.
    We refer to $\mathrm{RR}_\pi$ as ``RR induced by $\pi$''.
\end{dfn}

Proposition~\ref{prop:RR_is_EF1} holds regardless of the agent index order by simply modifying the assumption $i < j$ to $i' < j'$.
Therefore, $\mathrm{RR}_\pi$ forms a subset of EF1 allocations.

\begin{prop}
    Let $\setofRR := \{\mathrm{RR}_\pi\mid \text{$\pi$ is a permutation}\}$ be the set of all RR mechanisms induced by some permutation.
    Then, $\setofRR \subset \mathcal{F}_{\mathrm{EF1}}$.
\end{prop}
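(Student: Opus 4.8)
The claim is an immediate corollary of Proposition~\ref{prop:RR_is_EF1}, so the plan is simply to reduce to it by unwinding the definition of $\mathrm{RR}_\pi$. I would fix an arbitrary permutation $\pi$ and show that $\mathrm{RR}_\pi$ is an EF1 allocation mechanism, i.e., that for every valuation profile $\bm{V}$ the output $\mathrm{RR}_\pi(\bm{V})$ is (i) a well-defined allocation and (ii) EF1. Since $\pi$ is arbitrary, this yields $\setofRR \subseteq \mathcal{F}_{\mathrm{EF1}}$.

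For (i), I would note that, by definition, $\mathrm{RR}_\pi(\bm{V})$ is the result of running Algorithm~\ref{alg:rr} on the profile $(v_{\pi^{-1}(1)},\dots,v_{\pi^{-1}(n)})$ of the virtual agents $1',\dots,n'$. Algorithm~\ref{alg:rr} always terminates and returns a tuple of pairwise-disjoint bundles whose union is $[m]$; relabeling the agents does not affect this, so $\mathrm{RR}_\pi(\bm{V})$ is an allocation of $[m]$ to $[n]$.

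For (ii), the key observation is that the proof of Proposition~\ref{prop:RR_is_EF1} never uses the names of the agents: it uses only that whenever $i' < j'$, virtual agent $i'$ picks before virtual agent $j'$ in every round. Hence that proof applies verbatim to the virtual agents $1',\dots,n'$, so the produced allocation $(A_{1'},\dots,A_{n'})$ is EF1 with respect to the valuations $v_{1'},\dots,v_{n'}$. It then remains to translate back: real agent $i$ plays the role of virtual agent $\pi(i)$, with $A_i = A_{\pi(i)'}$ and $v_i = v_{\pi(i)'}$, so for any ordered pair of real agents $(i,j)$ the EF1 condition for $i$ towards $j$ coincides with the EF1 condition for virtual agent $\pi(i)'$ towards $\pi(j)'$, which holds. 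Because $\pi$ is a bijection, letting $(i,j)$ range over all ordered pairs of real agents is the same as letting $(\pi(i)',\pi(j)')$ range over all ordered pairs of virtual agents, so all required EF1 conditions are covered, and $\mathrm{RR}_\pi(\bm{V})$ is EF1. If a proper inclusion is intended, it can be seen from the case $n=2$, $m=1$: there $\setofRR$ contains only the two mechanisms that always hand the single good to agent $1$ or always to agent $2$, whereas the mechanism assigning the good to $\argmax_{i} v_{i1}$ with arbitrary tie-breaking is EF1 but equals neither of these.

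The only step requiring care is the bookkeeping in (ii): EF1 is a property quantified over all ordered pairs of agents, so one must confirm it is invariant under relabeling agents via the bijection $\pi$. I do not expect a genuine obstacle here; the content of the proposition lies entirely in Proposition~\ref{prop:RR_is_EF1}, and this statement is essentially a restatement of it together with the remark that $\mathrm{RR}_\pi$ is a mechanism in the formal sense.
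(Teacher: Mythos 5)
Your proposal is correct and follows essentially the same route as the paper, which justifies this proposition in one sentence by observing that the proof of Proposition~\ref{prop:RR_is_EF1} goes through verbatim after replacing the assumption $i<j$ with $i'<j'$ for the relabeled (virtual) agents. Your additional bookkeeping for translating EF1 back through the bijection $\pi$, and your example witnessing proper inclusion, are sound elaborations of what the paper leaves implicit.
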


\subsection{Differentiable Relaxation of RR}
To search over a subset of EF1 allocations as described in Equation~\eqref{eq:sample_approx}, we aim to represent $f_{\bm{\theta}}$ using a neural network based on RR. 
Specifically, we seek to construct a neural network that parameterically computes the agent order and executes RR within the network, allowing us to tune the parameters to optimize the agent order.
To achieve this, we must consider the differentiable relaxation of RR, enabling error backpropagation from predicted allocations.
Without such relaxation, the discrete nature of RR makes it unsuitable as a direct network layer.

First, we develop the differentiable relaxation of a single round of RR.
Then, we introduce $\softRR$, an algorithm for differentiable relaxation of RR.

\subsubsection{Differentiable Relaxation of One Round}
\begin{algorithm}[t]
    \caption{Differentiable Relaxation of One Round}
    \label{alg:continuous_round}
    \begin{algorithmic}[1]
        \Require A valuation profile $\bm{V}\in\R_{\ge 0}^{n\times m}$.
        \Ensure A matrix $\bm{R}\in\R^{n\times m}$.
        \Function{$\fsoftround_\tau$}{$\bm{V}$}
            \State $\bm{R} \leftarrow \zeromatrix_{n,m}$
            \State $\bm{c} \leftarrow \allonevec_m$
            \For{$i = 1,\dots,n$}
                \State $\bm{y} \leftarrow \softmax((\bm{V}[i] - \min(\bm{V}[i])\cdot\allonevec + \allonevec) \odot \bm{c} / \tau)$\label{algline:continuous_round_sety}
                \State $\bm{c} \leftarrow (\allonevec - \bm{y})\odot\bm{c}$\label{algline:continuous_round_updatec}
                \State $\bm{R}[i] \leftarrow \bm{y}$
            \EndFor
            \State \Return $\bm{R}$
        \EndFunction
    \end{algorithmic}
\end{algorithm}
We first formally define one round.
Consider a scenario with $n$ agents and $m$ goods.
Let $\bm{V}=(v_{ij})_{i\in [n], j\in [m]}$ represent the valuation of the $n$ agents over the $m$ available goods.
The allocation obtained after one round is denoted as $\fround:\bm{V}\mapsto\bm{A}$.
The resulting allocation from $\fround$ is:
\begin{align}
    \fround(\bm{V}) = \left[\indicator{j = \argmax_{j'\in C_i}\{v_{ij'}\}}\right]_{i\in[n],j\in[m]}\label{eq:fround},
\end{align}
where the $\argmax$ operator breaks ties in favor of the earlier index. 
The set $C_i$ is defined as 
\begin{align*}
    C_i := \left\{
        \begin{array}{ll}
            [m] & (\text{if $i = 1$}) \\
            C_{i-1}\setminus\left\{\argmax_{j'\in C_{i-1}}\{v_{ij'}\}\right\} & (\text{otherwise})
        \end{array}
    \right..
\end{align*}
That is, $C_i$ represents the set of available goods left for the agent $i$ after agents $1,2,\dots,i-1$ have selected their most preferred goods.
Using $C_i$, Equation~\eqref{eq:fround} calculates the resulting allocation obtained from one round.

Next, we present the differentiable relaxation of the function $\fround(\cdot)$.
The computation is detailed in Algorithm~\ref{alg:continuous_round}.
Our relaxed function, denoted as  $\fsoftround_\tau$, includes a {\em temperature} parameter $\tau > 0$.
Intuitively, we replace the $\argmax$ operator with a $\softmax$ that incorporates the temperature parameter $\tau$.
The set $C_i$ is represented by a vector $\bm{c}$ where each element $c_j\in [0,1]$ satisfies $c_j\approx 1$ if $j\in C_i$, and $c_j\approx 0$ otherwise.
To simulate the $\argmax$ operator over the remaining goods, we apply $\softmax$ to the expression $(\bm{V}[i] - \min(\bm{V}[i])\cdot\allonevec + \allonevec)\odot\bm{c}$.
The term $(- \min(\bm{V}[i])\cdot\allonevec + \allonevec)$ distinguish between remaining goods and those already taken: the $j$-th element becomes approximately greater than $1$ if $c_j\approx 1$, while it remains close to zero if $c_j\approx 0$.

The parameter $\tau$ controls the approximation precision. 
We formally prove that $\fsoftround_\tau$ converges to $\fround$ in the limit as $\tau\to+0$.

\begin{prop}
    Let $\bm{V}\in\R^{n\times m}$ be a valuation profile.
    Assume $n\le m$ and assume there are no ties in any row of $\bm{V}$.
    That is, $\forall i\in [n], \forall j,j'\in [m], j\neq j'\implies V_{ij} \neq V_{ij'}$.
    Then, 
    \begin{align*}
        \lim_{\tau\to +0} \fsoftround_\tau(\bm{V}) = \fround(\bm{V}).
    \end{align*}
    \label{prop:soft_one_round}
\end{prop}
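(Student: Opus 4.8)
The plan is to argue by induction on the agent index $i \in [n]$ that, in the limit $\tau \to +0$, the vector $\bm{c}$ maintained by Algorithm~\ref{alg:continuous_round} at the start of iteration $i$ converges to the indicator vector of the set $C_i$, and simultaneously that the row $\bm{R}[i] = \bm{y}$ produced in iteration $i$ converges to the $i$-th row of $\fround(\bm{V})$, namely the indicator of $\argmax_{j' \in C_i} v_{ij'}$. Since $\fsoftround_\tau$ and $\fround$ both assemble their output row by row, establishing this for every $i$ gives the claimed limit.

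The base case is $i = 1$: here $\bm{c} = \allonevec_m$, which is exactly the indicator of $C_1 = [m]$. For the inductive step I would analyze line~\ref{algline:continuous_round_sety}. Write $\bm{w} := \bm{V}[i] - \min(\bm{V}[i])\cdot\allonevec + \allonevec$, so every entry of $\bm{w}$ lies in $[1, \infty)$, with $w_{j} \ge 1$ and the entries strictly ordered (no ties in row $i$, by hypothesis). By the induction hypothesis, $\bm{c} \to \mathds{1}_{C_i}$ entrywise; since $|C_i| = m - (i-1) \ge m - n + 1 \ge 1$ (using $n \le m$), the set $C_i$ is nonempty, so $\bm{w} \odot \bm{c}$ converges to a vector whose entries are $w_j > 0$ for $j \in C_i$ and $0$ for $j \notin C_i$. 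The key elementary fact is then: for a fixed vector $\bm{z}$ with a unique maximal entry among a nonempty support and zeros elsewhere, $\softmax(\bm{z}/\tau) \to \mathds{1}[\text{position of that max}]$ as $\tau \to +0$; applied here, $\argmax_{j \in C_i} w_j = \argmax_{j \in C_i} v_{ij}$ (the shift and $+1$ are monotone and uniform across $j$), so $\bm{y} \to \mathds{1}[\argmax_{j' \in C_i} v_{ij'}]$, which is precisely $\fround(\bm{V})[i]$. Finally, line~\ref{algline:continuous_round_updatec} sets $\bm{c} \leftarrow (\allonevec - \bm{y}) \odot \bm{c}$, which in the limit becomes $(\allonevec - \mathds{1}[\argmax_{j' \in C_i} v_{ij'}]) \odot \mathds{1}_{C_i} = \mathds{1}_{C_i \setminus \{\argmax_{j' \in C_{i}} v_{ij'}\}} = \mathds{1}_{C_{i+1}}$, completing the induction.

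The main obstacle is making the limit interchange rigorous rather than hand-wavy: the $\softmax$ in iteration $i$ is applied to $\bm{w}\odot\bm{c}(\tau)$ where $\bm{c}(\tau)$ itself depends on $\tau$, so I cannot simply quote "$\softmax(\bm{z}/\tau)\to$ argmax" with $\bm{z}$ fixed — I must control the joint limit. I would handle this by a quantitative estimate: from the induction hypothesis, for $j \in C_i$ we have $(\bm{w}\odot\bm{c})_j = w_j(1 - o(1)) \ge 1 - o(1)$, while for $j \notin C_i$ we have $(\bm{w}\odot\bm{c})_j = w_j \cdot o(1)$ with $w_j$ bounded above by a constant independent of $\tau$; one must check the convergence rate of $\bm{c}(\tau) \to \mathds{1}_{C_i}$ is not too slow relative to $\tau$. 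In fact, tracing the recursion, each coordinate of $\bm{c}$ that should vanish is a product of factors of the form $1 - (\softmax\text{-entry})$, each of which is $\le e^{-c/\tau}$ for some constant $c > 0$ determined by the (strictly positive) gaps between distinct valuations; this exponential decay easily dominates the polynomial-in-$\tau$ scale of the $\softmax$ temperature, so the "off-support" mass stays negligible and the argument goes through. I would state this gap-based exponential bound as the crux lemma and then the induction is routine bookkeeping.
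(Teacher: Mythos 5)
Your proof follows the same route as the paper's: an induction on the agent index maintaining exactly the two loop invariants ($\bm{R}[i]$ equal to the $i$-th row of $\fround(\bm{V})$, and $\bm{c}$ equal to the indicator vector of $C_{i+1}$), combined with the fact that $\softmax(\cdot/\tau)$ converges to the one-hot $\argmax$ vector when there are no ties. The one place you go beyond the paper is in flagging the joint-limit issue — the paper's proof simply substitutes the limiting value of $\bm{c}$ into the $\softmax$ at each iteration without justifying the interchange of the limit with the sequential updates, whereas your gap-based exponential-decay estimate for the off-support coordinates of $\bm{c}(\tau)$ is precisely what would be needed to make that step fully rigorous.
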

\begin{proof}
    We set the following two loop invariants for the for-loop:
    \begin{enumerate}[(L1)]
        \item After the $i$-th iteration, $\bm{R}[i] = \fround(\bm{V})[i]$.
        \item After the $i$-th iteration, for all $j\in[m]$, $c_j = \indicator{j\in C_{i+1}}$.
    \end{enumerate}
    
    These invariants hold trivially before the for-loop.
    
    Now, consider entering the $i$-th iteration, assuming (L1) and (L2) hold for the $(i-1)$-th iteration.
    Let $\bm{z} = (\bm{V}[i] - \min(\bm{V}[i])\cdot\allonevec+\allonevec)\odot\bm{c}$.
    Because $\bm{V}[i] - \min(\bm{V}[i])\cdot\allonevec+\allonevec\ge\allonevec$ and (L2) holds, 
    \begin{align*}
        z_j = \left\{
            \begin{array}{ll}
               v_{ij} - \min(\bm{V}[i]) + 1 > 0 & (\text{if $j\in C_i$}) \\
               0 & (\text{otherwise})
            \end{array}
        \right..
    \end{align*}
    In addition, for any $\bm{x}\in\R^d$ with no ties, we have
    \begin{align}
        \lim_{\tau\to +0}\softmax(\bm{x}/\tau) = \left[\indicator{j = \argmax_{j'\in [d]} x_{j'}}\right]_j.\label{eq:softmax_approx}
    \end{align}
    Therefore, the vector $\bm{y}$ at Line~\ref{algline:continuous_round_sety} is
    \begin{align*}
        \bm{y} 
        &= \lim_{\tau\to +0} \softmax(\bm{z}/\tau)\\
        &= \left[\indicator{j = \argmax_{j'\in [m]}\{z_{j'}\}}\right]_j\\
        &= \left[\indicator{j = \argmax_{j'\in C_i}\{v_{ij'} - \min(\bm{V}[i]) + 1\}}\right]_j\\
        &= \left[\indicator{j = \argmax_{j'\in C_i}\{v_{ij'}\}}\right]_j\\
        &= \fround(\bm{V})[i].
    \end{align*}
    Thus, the loop invariant (L1) holds.
    Furthermore,
    \begin{align*}
        (\allonevec - \bm{y})\odot\bm{c}
        &= \left[\indicator{j \neq \argmax_{j'\in C_i}\{v_{ij'}\}}\right]_j\odot \left[\indicator{j \in C_i}\right]_j\\
        &= \left[\indicator{j \in C_i\setminus\left\{\argmax_{j'\in C_i}\{v_{ij'}\}\right\}}\right]_j.
    \end{align*}
    This means the update $\bm{c} \leftarrow (\allonevec - \bm{y})\odot\bm{c}$ at Line~\ref{algline:continuous_round_updatec} leads to the vector representation of $C_{i+1}$.
    Therefore, loop invariant (L2) holds for the $i$-th iteration.

    After the $n$-th iteration, we obtain $\bm{R} = r(\bm{V})$ by maintaining the loop invariant (L1).
\end{proof}

The two assumptions in Proposition~\ref{prop:soft_one_round} are necessary.
The `no-tie' condition for rows of $\bm{V}$ is required because the 
Equation~\eqref{eq:softmax_approx} only holds when the elements of $\bm{x}$ have no ties.
Similarly, the assumption $n \le m$ is essential because if $n > m$, then $\bm{c} \approx \bm{0}$ after the $m$-th iteration. 
As a result, the vector $\left(\bm{V}[i] - \min(\bm{V}[i]) \cdot \bm{1} + \bm{1} \right) \odot \bm{c} / \tau \approx \bm{0}$, leading to multiple zero ties, which prevents convergence to the hard $\argmax$.

\subsubsection{Differentiable Relaxation of RR}
\begin{algorithm}[t]
    \caption{$\softRR_\tau$}
    \label{alg:soft_rr}
    \begin{algorithmic}[1]
        \Require A valuation profile $\bm{V}\in\R_{\ge 0}^{n\times m}$
        \Ensure A matrix $\bm{R}\in\R^{n\times m}$
        \State $k\leftarrow \ceil{m/n}$ \label{algline:soft_rr_setk}
        \State $\bm{V}_{\mathrm{rep}} \leftarrow \rep(\bm{V}, k)$\label{algline:soft_rr_repeatV}
        \State $\bm{R}_{\mathrm{rep}}\leftarrow \fsoftround_\tau(\bm{V}_{\mathrm{rep}})$\label{algline:soft_rr_rounds}
        \State Split $\bm{R}_{\mathrm{rep}}$ into $k$ matrices: $\bm{R}_1\leftarrow\bm{R}_{\mathrm{rep}}[1:n, 1:m], \bm{R}_2\leftarrow\bm{R}_{\mathrm{rep}}[n+1:2n, 1:m],\dots, \bm{R}_k\leftarrow\bm{R}_{\mathrm{rep}}[(k-1)n+1:kn, 1:m]$\label{algline:soft_rr_splitR}
        \State $\bm{R}\leftarrow \sum_{r=1}^k \bm{R}_r$\label{algline:soft_rr_sumR}
        \State \Return $\bm{R}$
    \end{algorithmic}
\end{algorithm}

\begin{figure}[t]
    \centering
    \includegraphics[width=\linewidth]{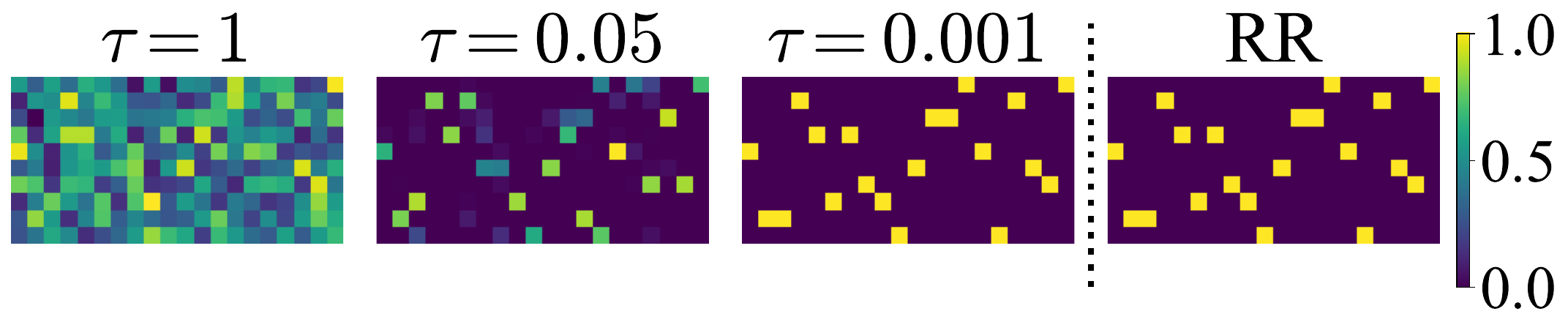}
    \caption{The convergence of $\softRR_\tau$. 
    The first three figures from the left show the outputs of $\softRR_\tau$ for $\tau=1$, $0.05$, and $0.001$, respectively.
    The rightmost figure is the output of RR.}
    \label{fig:softrr_tau}
    \Description{Convergence of SoftRR to RR.}
\end{figure}

Using $\fsoftround_\tau(\cdot)$ as a building block, we propose $\softRR_\tau$, the algorithm that makes RR differentiable and enables it for backpropagation.
The pseudo-code for this process is described in Algorithm~\ref{alg:soft_rr}.

Intuitively, $\softRR_\tau$ approximates the original RR by converting the multiple rounds into a single round with copied agents.
To achieve this, we repeat $\bm{V}$ by $k=\ceil{m/n}$ times along the row direction using the function, defined as
\begin{align*}
    \bm{V}_{\mathrm{rep}} = \rep(\bm{V},k) := \left.\begin{bmatrix} \bm{V}\\\vdots\\\bm{V}\end{bmatrix}\right\} (\text{$k$ times}).
\end{align*}
We apply $\fsoftround_\tau$ to $\bm{V}_\mathrm{rep}$.
The operations in Lines~\ref{algline:soft_rr_setk} to \ref{algline:soft_rr_rounds} simulate  each agent $i$ being replicated into $k$ distinct agents, with each of $k$ agents receiving goods individually.
Finally, we sum the $k$ allocations to consolidate them into a single agent in Lines~\ref{algline:soft_rr_splitR} and \ref{algline:soft_rr_sumR}.

$\softRR_\tau$ incorporates the temperature parameter $\tau$ from $\fsoftround_\tau$.
We achieve results analogous to those in Proposition~\ref{prop:soft_one_round} as $\tau\to +0$.

\begin{prop}
    Let $\bm{V}\in\R_{\ge 0}^{n\times m}$ be a valuation profile.
    Assume $m\mod n = 0$ and no ties exist in any row of $\bm{V}$.
    Then, 
    \begin{align*}
        \lim_{\tau\to +0} \softRR_\tau(\bm{V}) = \mathrm{RR}(\bm{V}).
    \end{align*}
\end{prop}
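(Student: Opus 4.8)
The plan is to reduce the statement to Proposition~\ref{prop:soft_one_round} applied to the repeated profile, and then argue combinatorially that one round of selection on the replicated agents, regrouped as in Algorithm~\ref{alg:soft_rr}, reproduces the $k$-round execution of RR.

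First I would fix the shapes. Since $m \bmod n = 0$, Line~\ref{algline:soft_rr_setk} gives $k = \ceil{m/n} = m/n$, so $\bm{V}_{\mathrm{rep}} = \rep(\bm{V},k) \in \R_{\ge 0}^{(kn)\times m} = \R_{\ge 0}^{m\times m}$, and for $t = (r-1)n+i$ with $r\in[k]$, $i\in[n]$, the $t$-th row of $\bm{V}_{\mathrm{rep}}$ is a copy of $\bm{V}[i]$. In particular every row of $\bm{V}_{\mathrm{rep}}$ is a row of $\bm{V}$, so $\bm{V}_{\mathrm{rep}}$ inherits the no-tie property, and it has $kn = m \le m$ rows. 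Hence the hypotheses of Proposition~\ref{prop:soft_one_round} hold for $\bm{V}_{\mathrm{rep}}$, yielding $\lim_{\tau\to+0}\fsoftround_\tau(\bm{V}_{\mathrm{rep}}) = \fround(\bm{V}_{\mathrm{rep}})$. The splitting and summation in Lines~\ref{algline:soft_rr_splitR}--\ref{algline:soft_rr_sumR} form a fixed linear, hence continuous, map on matrices, so the limit commutes with it: $\lim_{\tau\to+0}\softRR_\tau(\bm{V})$ equals that same map applied to $\fround(\bm{V}_{\mathrm{rep}})$.

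Next I would identify this quantity with the $\{0,1\}$ allocation matrix of $\mathrm{RR}(\bm{V})$. I would set up an induction over the global selection index $t = 1,\dots,m$ of $\fround$ on $\bm{V}_{\mathrm{rep}}$, matching step $t$ with the pick of agent $i = t-(r-1)n$ in round $r = \ceil{t/n}$ of Algorithm~\ref{alg:rr}. The induction hypothesis is that, just before step $t$, the remaining-goods set $C_t$ of Equation~\eqref{eq:fround} equals the set $C$ maintained by RR at the corresponding point. Since virtual agent $t$ has the same valuation row $\bm{V}[i]$ as the RR agent picking at that moment, and under the no-tie assumption the $\argmax$ over any nonempty subset of $[m]$ is unique — so the tie-breaking convention in Equation~\eqref{eq:fround} is irrelevant and both procedures are deterministic, with $|C_t| = m-t+1 \ge 1$ ensuring no step is vacuous — the good selected at step $t$ of $\fround$ is exactly $g^*$ in that RR step, and the two ``available'' sets update identically to maintain the hypothesis. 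Consequently, the $i$-th row of block $\bm{R}_r$ is the indicator of the good agent $i$ receives in round $r$ of RR, and $\sum_{r=1}^k \bm{R}_r$ has $i$-th row equal to the indicator vector of agent $i$'s RR bundle $A_i$; that is, it is the matrix representation of $\mathrm{RR}(\bm{V})$.

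The main obstacle is the bookkeeping in this second step — carefully aligning the flattened one-round selection on $m$ replicated agents with the two-level round/agent loop of RR and checking that their ``remaining goods'' state variables stay in lockstep. The remaining ingredients (verifying the hypotheses of Proposition~\ref{prop:soft_one_round} for $\bm{V}_{\mathrm{rep}}$, and the continuity argument that lets $\lim_{\tau\to+0}$ pass through splitting and summation) are routine.
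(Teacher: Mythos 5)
Your proposal is correct and follows essentially the same route as the paper's proof: apply Proposition~\ref{prop:soft_one_round} to $\bm{V}_{\mathrm{rep}}$ (an $m\times m$ matrix with no row ties, so both hypotheses hold), then identify the split-and-summed one-round output with the multi-round RR allocation. You spell out two steps the paper leaves implicit — that the limit passes through the linear splitting/summation, and the lockstep induction matching the flattened selection index $t$ with round $r$ and agent $i$ of Algorithm~\ref{alg:rr} — but these are elaborations of the same argument, not a different one.
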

\begin{proof}
    Let $k = m/n\in\N$.
    Because $\bm{V}_{\mathrm{rep}}\in\R_{\ge 0}^{m\times m}$ has no ties in any of its rows, $\bm{R}_{\mathrm{rep}} = \fsoftround_\tau(\bm{V}_{\mathrm{rep}})$ computes the allocation result of one round for $kn$ distinct agents and $m$ goods exactly in the limit of $\tau\to+0$, by Proposition~\ref{prop:soft_one_round}.
    Hence, Lines~\ref{algline:soft_rr_splitR} and \ref{algline:soft_rr_sumR} produces the final allocation, which is identical to that of RR.
\end{proof}
We present an example of the convergence of $\softRR_\tau$ with respect to $\tau$ in Figure~\ref{fig:softrr_tau}.
We independently sampled $v_{i,j}\sim U[0,1]$, generating a valuation profile $\bm{V}\in\R_{\ge 0}^{10\times 20}$.
As the parameter $\tau$ decreases, 
$\softRR_\tau$ converges to the output of RR.

\subsection{NeuralRR}
\begin{figure*}[t]
    \centering
    \includegraphics[width=0.9\linewidth]{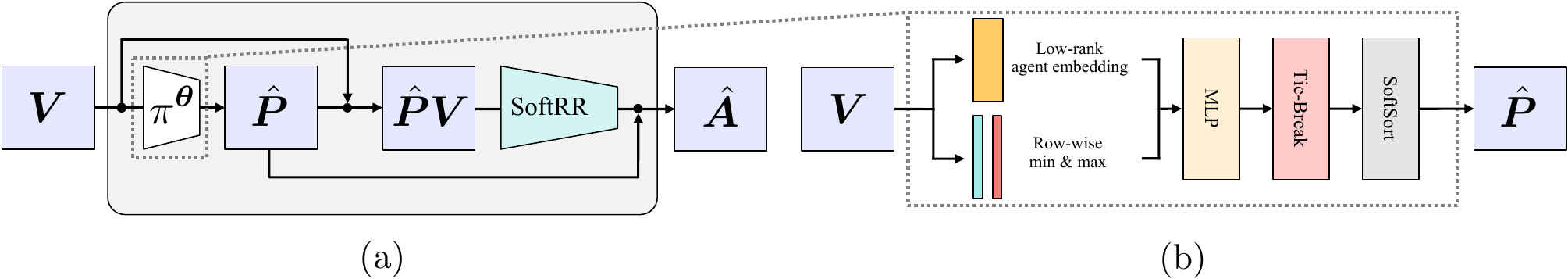}
    \caption{The architecture of $\neuralRR$. 
    (a) Overall architecture: 
    The input valuation $\bm{V}$ as an input, is fed into the network, where a permutation matrix $\hat{\bm{P}}$ is computed. 
    This matrix is then multiplied by $\bm{V}$, and $\softRR$ is executed to obtain the allocation result. (b) Sub-network $\pi^{\bm{\theta}}$: This sub-network computes the permutation matrix $\hat{\bm{P}}$ from the input valuation $\bm{V}$.
    }
    \label{fig:neural_drr}
    \Description{Architecture of the proposed neural network.}
\end{figure*}

Since $\softRR$ is differentiable, we can optimize the agent order via backpropagation.
Using $\softRR$ as a building block, we propose $\neuralRR$, a novel neural network architecture that models $f_{\bm{\theta}}$ in Equation~\eqref{eq:sample_approx}.
Given a valuation profile as input, $\neuralRR$ computes the agent order parameterically, and then executes $\softRR$ to produce a fractional allocation.
The architecture is shown in Figure~\ref{fig:neural_drr}.

Our architecture models RR induced by a permutation parameterically computed from the input valuation.
Specifically, $\neuralRR$ models $\mathrm{RR}_{\pi^{\bm{\theta}}(\bm{V})}(\bm{V})$, where the input valuation profile $\bm{V}$ is transformed into a permutation $\hat{\pi} = \pi^{\bm{\theta}}(\bm{V})$ by a sub-network $\pi^{\bm{\theta}}$ with learnable parameters $\bm{\theta}$. 
The resulting allocation is then obtained through $\softRR$ using $\hat{\pi}$ and $\bm{V}$.
Because $\softRR$ supports backpropagation, the architecture can learn the agent order by minimizing the output errors and search over $\setofRR$ to find the solution in Equation~\eqref{eq:sample_approx}.

The computation of $\neuralRR$ proceeds as follows.
First, the sub-network $\pi^{\bm{\theta}}(\bm{V})$ computes a permutation matrix $\hat{\bm{P}}\in\R^{n\times n}$ representing $\hat{\pi}$ from the valuation profile $\bm{V}$.
To do this, $\pi^{\bm{\theta}}$ applies singular value decomposition to $\bm{V}$ to obtain agent-specific low-rank embeddings.
Then, the row-wise minimum and maximum values of $\bm{V}$ are concatenated to the embeddings.
These min and max values are explicitly calculated, as they represent fundamental features not easily captured by permutation-invariant models with a fixed-dimensional latent spaces, such as DeepSets~\cite{wagstaff2022universal,zaheer2017deepsets}.
Next, each row is fed into a multi-layer perceptron to project it into a $1$-dimensional space, resulting in an $n$-dimensional vector.
To break ties among the values of this vector, we apply the tie-breaking function $\mathrm{TieBreak}(\bm{a}) := \bm{a} + \mathrm{rank}(\bm{a})$, which adds $\bm{a}\in\R^n$ to $\mathrm{rank}(\bm{a}) := [\#\{j\mid \text{$a_j < a_i$ or $a_j = a_i$ and $j < i$} \}]_i$.\footnote{
   This function is not strictly differentiable with respect to $\bm{a}$, but we detach the term $\mathrm{rank}(\bm{a})$ from the computational graph and treat it as a constant.
}
Finally, the permutation matrix $\hat{\bm{P}}$ is computed using SoftSort~\cite{prillo2020softsort}: $\mathrm{SoftSort}_{\tau'}(\bm{a})=\softmax\left(\frac{-(\mathrm{sort}(\bm{a})^\top\allonevec - \allonevec^\top\bm{a})^2}{\tau'}\right)$, where $\bm{X}^2 := [X_{i,j}^2]_{i,j}$ represents the element-wise square, and the $\softmax$ is applied row-wise.
$\allonevec$ has the same dimension as $\bm{a}$.
SoftSort provides a continuous relaxation of $\mathrm{argsort}$ operator, computing a permutation matrix that sorts the input vector $\bm{a}$.
After computing $\hat{\bm{P}}$, we multiply it by the input valuation to reorder the agents.
This gives the allocation result $\softRR(\hat{\bm{P}}\bm{V})$.
Next, we compute $\hat{\bm{P}}^\top\softRR(\hat{\bm{P}}\bm{V})$ to restore the original agent order, and normalize each column to obtain the final matrix $\hat{\bm{A}}\in\R^{n\times m}$.
Because $\hat{\bm{A}}$ is normalized column-wise, all goods are fractionally allocated to the agents.

$\neuralRR$ operates differently during training and inference.
Specifically, during training, $\neuralRR$ produces fractional allocations, whereas during inference, it generates integral allocations because we use hard computations for both $\hat{\bm{P}}$ and $\softRR$.
As a result, during inference, $\neuralRR$ is equivalent to $\mathrm{RR}_{\pi^{\bm{\theta}}(\bm{V})}(\bm{V})$, and thus it rigorously satisfies EF1 for all parameters $\bm{\theta}$.
While NRR relaxes the problem in Equation~\eqref{eq:sample_approx} by outputting fractional allocations during training, it still adheres to the original problem because it can be used as an EF1 allocation mechanism at any point during training.

\subsection{Loss Function}
To train $\neuralRR$, we use a column-wise cross-entropy loss function.
Specifically, we define the loss function $d$ in Equation~\eqref{eq:sample_approx} as
\begin{align}
    d(\bm{A},\hat{\bm{A}}) := \frac{1}{m}\sum^m_{j=1} \ell_{\mathrm{CE}}(\bm{A}[:,j], \hat{\bm{A}}[:,j])\label{eq:celoss},
\end{align}
where $\ell_{\mathrm{CE}}(\bm{y},\hat{\bm{y}})$ represents the cross-entropy loss between the one-hot vector $\bm{y}$ and a probability vector $\hat{\bm{y}}$.
We back-propagate this error through $\softRR$ and optimize the parameters using a standard gradient decent method.


\section{Experiments}
\begin{figure*}[t]
    \centering
    \includegraphics[width=0.92\linewidth]{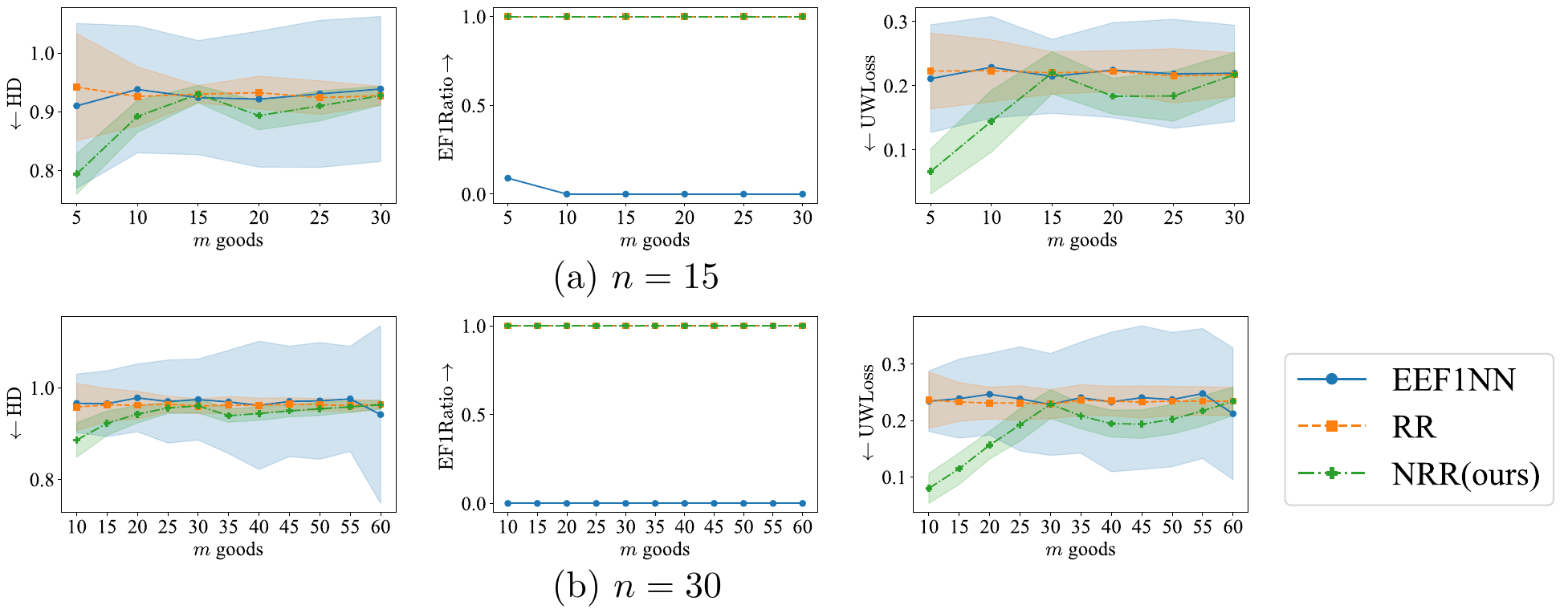}
    \caption{Evaluation metrics for varying numbers of agents and goods. 
    (a): Results for $n=15$ agents.
    (b): Results for $n=30$ agents.
    The horizontal axis represents the number of goods $m$.
    The vertical axes in each figure correspond to the following metrics: Hamming distance (leftmost), ratio of EF1 allocations (middle), and utilitarian welfare loss (rightmost).
    The symbols $\downarrow$ and $\uparrow$ indicate that the metric improves as the value decreases and increases, respectively.}
    \label{fig:results}
    \Description{Experimental results.}
\end{figure*}
\begin{figure*}[t]
    \centering
    \includegraphics[width=0.92\linewidth]{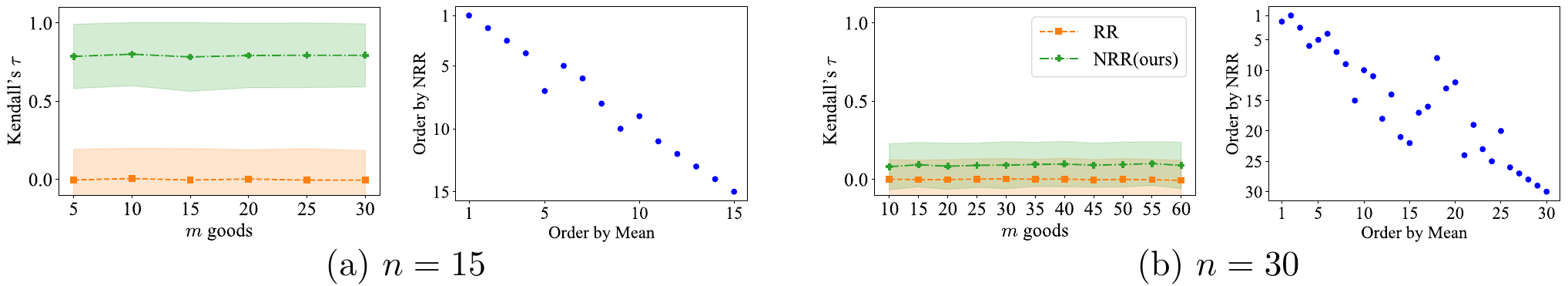}
    \caption{Kendall's $\tau$ and examples of learned agent orders.
    (a): The results for $n=15$ agents.
    (b): The results for $n=30$ agents.
    For each panel, the first figure shows Kendall's $\tau$ between learned orders and ones calculated by mean valuations.
    The legend is shared between the two panels.
    The second figure shows an example of the agent orders of mean valuation and NRR.
    The point $(x,y)$ means that the agent at rank $x$ in the mean valuation order is placed at rank $y$ in the learned order.}
    \label{fig:orders}
    \Description{Learned orders.}
\end{figure*}
\begin{figure}
    \centering
    \includegraphics[width=0.8\linewidth]{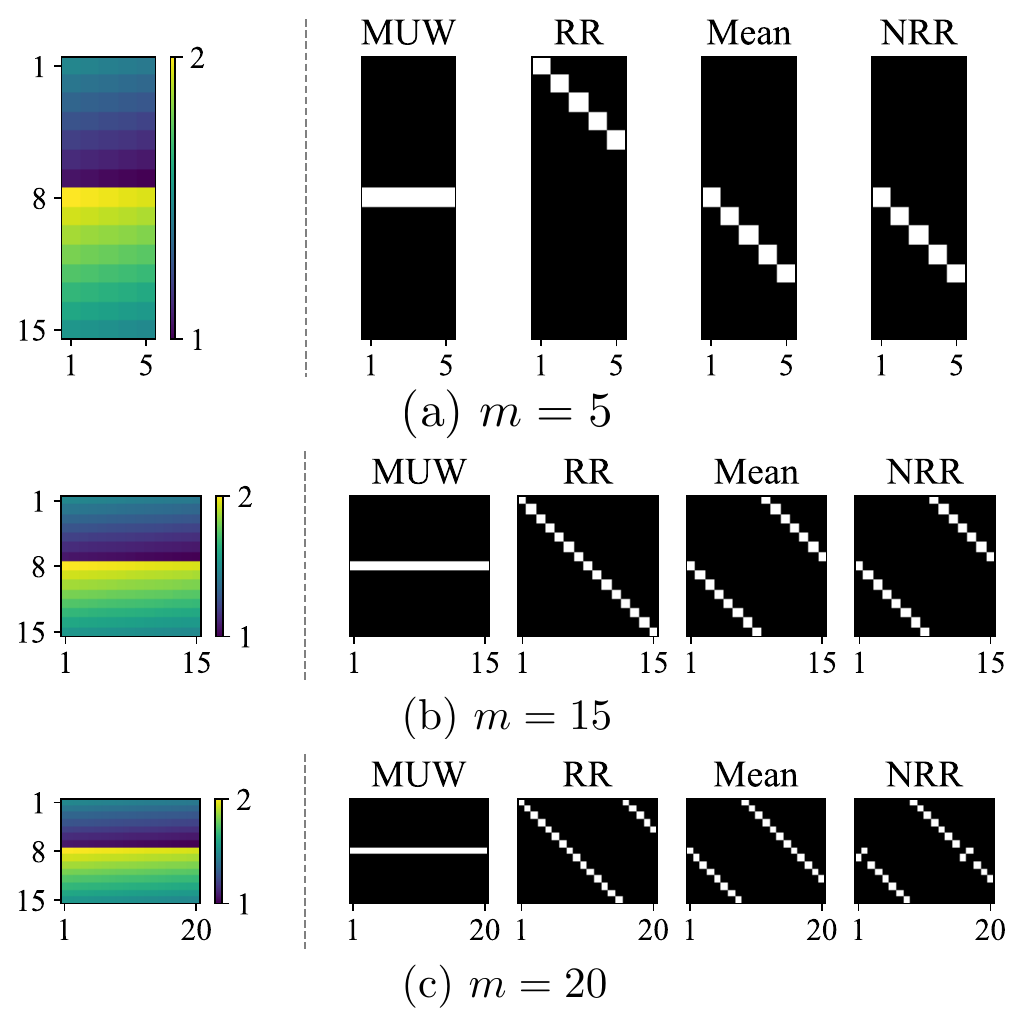}
    \caption{Examples of allocations. 
    The leftmost heat map represents the valuation profile. 
    The remaining four columns correspond to allocation results by MUW, RR, RR induced by the highest mean valuation order, and pre-trained NRR.
    In the heatmap, areas with a value of $0$ are represented in black, while areas with a value of $1$ are represented in white.}
    \label{fig:allocation}
    \Description{Examples of allocations.}
\end{figure}
We conducted experiments to compare the effectiveness of our proposed method with baseline methods in learning EF1 allocation mechanisms from examples.

\subsection{Experimental Setting}
\subsubsection{Synthetic Data}
We synthesized datasets for good allocations by modeling $i$'s valuation of good $j$ as $v_{ij} = \mu_i + \varepsilon_{i,j}$, where $\mu_i$ and $\varepsilon_{i,j}$ denote the average valuation of agent $i$ and random error, respectively.
We sampled $\mu_i$ and $\varepsilon_{i,j}$ independently and identically from $U[1,2]$ and $U[0,0.01]$, respectively.
This low-rank valuation model is commonly used to represent human preferences as seen in recommender systems~\cite{koren2009matrix}.
We generated $100$ samples  for the training, validation, and test datasets.

We synthesized allocation results using the maximum utilitarian welfare (MUW) rule as an implicit allocation rule. 
This mechanism was selected for comparison with other methods across several metrics in addition to proximity to the correct allocation results.
The MUW rule outputs the allocation that maximizes the sum of agent valuations:
\begin{align*}
    \mathrm{MUW}(\bm{V}) := \argmax\{\mathrm{UW}(\bm{V},\bm{A})\mid\text{$\bm{A}\in\{0,1\}^{n\times m}$ is an allocation}\},
\end{align*}
where $\mathrm{UW}(\bm{V},\bm{A}) := \sum^n_{i=1} v_i(A_i)$ denotes the utilitarian welfare achieved by allocation $\bm{A}$.
We computed the solutions using the Gurobi optimizer~\cite{gurobi}.
For training and validation data, we set the number of agents $n=15$ and the number of goods $m=5$, or $n=30$ and $m=10$.
We tested models trained on $n=15, m=5$ using datasets consisting of $n=15$ agents and $m = 5k~(1\le k\le 6)$.
For models trained on $n=30$ and $m=10$, we tested them using datasets with $n=30$ agents and $m = 5k~(2\le k\le 12)$.

\subsubsection{Baselines}
We use two models as baselines for comparison.
The first is the original RR, which does not reorder the agents.
The second is EEF1NN, a fully convolutional neural network proposed by \citet{mishra2022eef1nn}.
EEF1NN takes a valuation matrix as input and outputs a fractional allocation during training or an integral allocation during inference.
Because EEF1NN does not inherently satisfy EF1, \citet{mishra2022eef1nn} introduced a loss function that includes the EF violation.
In line with this, we define the loss function by adding the EF violation to the cross-entropy loss in Equation~\eqref{eq:celoss}:
\begin{align*}
    \mathrm{Loss}(\bm{A},\hat{\bm{A}}) := d(\bm{A},\hat{\bm{A}}) + \frac{\lambda}{n}\sum^{n}_{i=1}\mathrm{Envy}_i(\hat{\bm{A}}),
\end{align*}
where $\lambda$ is a Lagrangian multiplier and $\mathrm{envy}_i(\hat{\bm{A}})$ represents the sum of envy of agent $i$ toward other agents:
\begin{align*}
\mathrm{Envy}_i(\hat{\bm{A}}) := \sum_{i'=1}^n\max\{0,v_{i}(\hat{A}_{i'}) - v_i(\hat{A}_i)\}.
\end{align*}

\subsubsection{Evaluation Metrics}
We used three evaluation metrics to assess the performance of the models.

First, we computed Hamming distance (HD):
\begin{align*}
    \mathrm{HD}(\bm{A},\hat{\bm{A}}) := \frac{1}{2m}\sum_{i=1}^n\sum_{j=1}^m |A_{ij} - \hat{A}_{ij}|.
\end{align*}
Using HD, we evaluated the ability of $\neuralRR$ to predict allocations that are close to the correct ones.
We multiplied by $1/(2m)$ for normalization, because the maximum value is $2m$.

Second, following existing work~\cite{mishra2022eef1nn}, we evaluated the ratio of the number of predicted allocations that are EF1 to the total number of test samples.
We denote this ratio by
\begin{align*}
    \mathrm{EF1Ratio} := |\{\hat{\bm{A}}\mid\text{$\hat{\bm{A}}$ is EF1}\}| / (\text{\# Test instances}).
\end{align*}

Third, we calculated welfare loss.
Because we used MUW to generate the target allocations, we evaluated how much welfare the predicted allocations gained.
Formally, we defined the utilitarian welfare loss (UWLoss) as 
\begin{align*}
    \mathrm{UWLoss}(\bm{V}, \hat{\bm{A}}) := 1 - \mathrm{UW}(\bm{V},\hat{\bm{A}})/\mathrm{MUW}(\bm{V}).
\end{align*}

Note that we did not train the models incorporating the welfare loss.
This is because our motivation is to test whether we can learn implicit allocation rules through examples, which we instantiated as MUW in this experiment.
That is, we cannot explicitly formulate the implicit rules in practice, and thus cannot a priori know if such rules consider the welfare function.
However, our model is independent of loss functions, and therefore we can also incorporate such welfare metrics into the loss function in Equation~\eqref{eq:sample_approx}.

\subsubsection{Hyper-parameters}
For $\neuralRR$, we fixed the rank to $3$ for the agent low-rank embedding dimension and trained for $20$ epochs with a batch size of $4$.
We selected the two temperature parameters $\tau, \tau'$ for both $\softRR$ and SoftSort from $\{1.0, 0.1, 0.01\}$, which yielded the minimum average HD on the validation data.
We excluded hyper-parameters that did not result in a decrease in training loss.
For EEF1NN, we trained it for $20$ epochs with batch size of $4$ and set $\lambda=1.0$, which is the midpoint of the range $[0.1, 2.0]$ considered in prior research~\cite{mishra2022eef1nn}.

\subsection{Results}
Figure~\ref{fig:results} shows the evaluation metrics for various numbers of agents and goods.

For the results with $n=15$ agents, our proposed method yielded a lower HD compared to RR and EEF1NN.
Notably, the difference between $\neuralRR$ and RR was most pronounced in the ranges where $m < n$ and $n < m < 2n$, with the difference being larger in the $m < n$ case than in $n < m < 2n$.
RR and NRR are EF1 allocation mechanisms by construction, and EF1Ratio remained $1.0$.
In contrast, EEF1NN failed to output EF1 allocations in nearly all cases as reported in prior work~\cite{mishra2022eef1nn}.
UWLoss exhibited a similar pattern to HD.
The results for $n=30$ showed similar trends to those for $n=15$, with the performance gap between RR and $\neuralRR$ narrowing compared to the $n = 15$ case.


We examined the agent orders learned by $\neuralRR$, as described in Figure~\ref{fig:orders}.
For each valuation profile $\bm{V}$ in the test dataset, we compared the predicted order $\hat{\bm{P}} = \pi^{\bm{\theta}}(\bm{V})$ in $\neuralRR$ with the order based on the mean valuation $\left[m^{-1}\sum_jv_{ij}\right]_i$.
Specifically, we calculated Kendall's $\tau$ between the two orders for test instances and compared it to the fixed order $1,2,\dots,n$ used in RR.
The results confirmed that, on average, $\neuralRR$ computed orders closer to mean valuations than the fixed order used by RR.
Furthermore, we visualized an example of the two orders.
We analyzed the changes in agent orders from the mean valuation order to the learned order by plotting points $(x,y)$, where the agent ranked at position $x$ in the mean valuation order is positioned at rank $y$ in the learned order.
In these examples, the learned order closely aligned with the mean valuation, specifically at the higher and lower ranks.

\subsection{Discussion}
The experimental results demonstrate that our model outperforms the baselines across all the three evaluation metrics.

The improvements in HD and UWLoss can be attributed to both the data characteristics and $\neuralRR$'s ability to optimize the agent order. 
Because valuations were generated as $v_{ij} = \mu_i + \varepsilon_{ij}$, each training allocation typically favors the agent $i^* = \argmax_i\{\mu_i\}$ with the highest average valuation.
Thus, optimizing the agent order to place $i^*$ near the top helps approximate RR effectively for the MUW rule.
Specifically, when $kn < m < (k+1)n$ for some $k\in\N$, the optimized order can allocate $(k+1)$ goods to $i^*$, while a random order may only allocate $k$ goods.
Conversely, when $m = kn$ for some $k\in \N$, executing RR with random orders can yield the same Hamming distance because it allocates $k$ goods to $i^*$ regardless of the agent order.
Additionally, as $k$ increases, the difference between $k+1$ and $k$ diminishes, making HD between the two orders converge.
UWLoss correlates with the characteristics of HD because as more goods are allocated to $i^*$, UWLoss increases.
This relationship is illustrated in Figure~\ref{fig:allocation}.
We created a valuation profile $\bm{V}$ where $n=15$ and $i^*=7$ with $v_{i,1}>\cdots>v_{i,m}$ for all $i$.
When $m\neq kn$ for any $k$, RR induced by the highest mean valuation gives more goods to $i^*$ than RR induced by the original agent indices, while both give equal number of goods when $n=m$.
Note that we did not include RR based on the mean valuations because the experiments assume we cannot know in advance that the MUW rule is the implicit rule; therefore, we cannot predefine the order.

As shown in Figure~\ref{fig:orders}, $\neuralRR$ can learn the order based on mean valuation, resulting in similar performance to the order. 
The performance gap between RR and $\neuralRR$ narrowed for $n = 30$ compared to $n = 15$, as $\neuralRR$'s order estimation ability declined when $n = 30$. 
Further improvements for larger $n$ values are left for future work.

\section{Conclusion and Limitation}
We studied learning EF1 allocation mechanisms through examples based on implicit rules.
We first developed SoftRR for differentiable relaxation of RR, and proposed a neural network called NRR based on SoftRR.
We conducted experiments with synthetic data and compared NRR to baselines.
Experimental results show that our architecture can learn implicit rules by optimizing agent orders.
Improvement for larger number of agents are left future work.

\balance




\begin{acks}
This work is supported by JSPS KAKENHI Grant Number 21H04979 and JST, PRESTO Grant Number JPMJPR20C5.
\end{acks}



\bibliographystyle{ACM-Reference-Format} 
\bibliography{main}


\end{document}